\newtheorem{theorem}{Theorem}[section]
\newtheorem{proposition}{Proposition}[section]
\theoremstyle{definition}
\newtheorem{defn}{Definition}[section]
\title{Representer Point Selection for \\Explaining Deep Neural Networks}
\newcommand{\printfnsymbol}[1]{%
  \textsuperscript{\@fnsymbol{#1}}%
}
\author{
Chih-Kuan Yeh\thanks{Equal contribution} \qquad Joon Sik Kim \footnotemark[1]  \qquad Ian E.H. Yen \qquad Pradeep Ravikumar\\
  Machine Learning Department\\
  Carnegie Mellon University\\
  Pittsburgh, PA 15213 \\
  \texttt{\{cjyeh, joonsikk, eyan, pradeepr\}@cs.cmu.edu} \\
}
	\newcommand{\func}{{ \Phi}}
    \newcommand{\f}{\mathbf{f}}
    \newcommand{\x}{\mathbf{x}}
    \newcommand{\y}{\mathbf{y}}
    \newcommand{\X}{\mathbf{X}}
	\newcommand{\act}{\sigma}
    \newcommand{\para}{\mathbf{\Theta}}
    \newcommand{\alf}{\mathbf{\alpha}}
\date{}      
\begin{document}


\maketitle

\begin{abstract}
We propose to explain the predictions of a deep neural network, by pointing to the set of what we call \textit{representer points} in the training set, for a given test point prediction. Specifically, we show that we can decompose the pre-activation prediction of a neural network into a linear combination of activations of training points, with the weights corresponding to what we call representer values, which thus capture the importance of that training point on the learned parameters of the network. But it provides a deeper understanding of the network than simply training point influence: with positive representer values corresponding to excitatory training points, and negative values corresponding to inhibitory points, which as we show provides considerably more insight. Our method is also much more scalable, allowing for real-time feedback in a manner not feasible with influence functions. 
\end{abstract}

\section{Introduction}

As machine learning systems start to be more widely used, we are starting to care not just about the accuracy and speed of  the predictions, but also \textit{why} it made its specific predictions. While we need not always care about the why of a complex system in order to trust it, especially if we observe that the system has high accuracy, such trust typically hinges on the belief that some other expert has a richer understanding of the system. For instance, while we might not know exactly how planes fly in the air, we trust some experts do. In the case of machine learning models however, even machine learning experts do not have a clear understanding of why say a deep neural network makes a particular prediction. Our work proposes to address this gap by focusing on improving the understanding of experts, in addition to lay users. In particular, expert users could then use these explanations to further fine-tune the system (e.g. dataset/model debugging), as well as suggest different approaches for model training, so that it achieves a better performance. 

Our key approach to do so is via a representer theorem for deep neural networks, which might be of independent interest even outside the context of explainable ML. We show that we can decompose the pre-activation prediction values into a linear combination of training point activations, with the weights corresponding to what we call representer values, which can be used to measure the importance of each training point has on the learned parameter of the model. Using these representer values, we select representer points -- training points that have large/small representer values -- that could aid the understanding of the model's prediction.

Such representer points provide a richer understanding of the deep neural network than other approaches that provide influential training points, in part because of the meta-explanation underlying our explanation: a positive representer value indicates that a similarity to that training point is \emph{excitatory}, while a negative representer value indicates that a similarity to that training point is \emph{inhibitory}, to the prediction at the given test point. It is in these inhibitory training points where our approach provides considerably more insight compared to other approaches: specifically, what would cause the model to \emph{not} make a particular prediction? In one of our examples, we see that the model makes an error in labeling an antelope as a deer. Looking at its most inhibitory training points, we see that the dataset is rife with training images where there are antelopes in the image, but also some other animals, and the image is labeled with the other animal. These thus contribute to inhibitory effects of small antelopes with other big objects: an insight that as machine learning experts, we found deeply useful, and which is difficult to obtain via other explanatory approaches. We demonstrate the utility of our class of \emph{representer point} explanations through a range of theoretical and empirical investigations.
    
\section{Related Work}
There are two main classes of approaches to explain the prediction of a model. The first class of approaches point to important input features. \citet{ribeiro2016should} provide such feature-based explanations that are model-agnostic; explaining the decision locally around a test instance by fitting a local linear model in the region. \citet{ribeiro2018anchors} introduce Anchors, which are locally sufficient conditions of features that ``holds down'' the prediction so that it does not change in a local neighborhood. Such feature based explanations are particularly natural in computer vision tasks, since it enables visualizing the regions of the input pixel space that causes the classifier to make certain predictions. There are numerous works along this line, particularly focusing on gradient-based methods that provide saliency maps in the pixel space~\citep{simonyan2013deep, shrikumar2017learning,sundararajan2017axiomatic,bach2015pixel}.

The second class of approaches are sample-based, and they identify training samples that have the most influence on the model's prediction on a test point. Among model-agnostic sample-based explanations are prototype selection methods \citep{bien2011prototype,kim2014bayesian} that provide a set of ``representative'' samples chosen from the data set. \citet{kim2016examples} provide criticism alongside prototypes to explain what are not captured by prototypes. Usually such prototype and criticism selection is model-agnostic and used to accelerate the training for classifications. Model-aware sample-based explanation identify influential training samples which are the most helpful for reducing the objective loss or making the prediction. Recently, \citet{koh2017understanding} provide tractable approximations of influence functions that  characterize the influence of each sample in terms of change in the loss. \citet{anirudh2017influential} propose a generic approach to influential sample selection via a graph constructed using the samples.

Our approach is based on a representer theorem for deep neural network predictions. Representer theorems~\cite{scholkopf2001generalized} in machine learning contexts have focused on non-parametric regression, specifically in reproducing kernel Hilbert spaces (RKHS), and which loosely state that under certain conditions the minimizer of a loss functional over a RKHS can be expressed as a linear combination of kernel evaluations at training points. There have been recent efforts at leveraging such insights to compositional contexts~\citep{bohn2017representer,unser2018representer}, though these largely focus on connections to non-parametric estimation.  \citet{bohn2017representer}  extend the representer theorem to compositions of kernels, while \citet{unser2018representer} draws connections between deep neural networks to such deep kernel estimation, specifically deep spline estimation. In our work, we consider the much simpler problem of explaining pre-activation neural network predictions in terms of activations of training points, which while less illuminating from a non-parametric estimation standpoint, is arguably much more explanatory, and useful from an explainable ML standpoint.

\section{Representer Point Framework}

Consider a classification problem, of learning a mapping from an input space $\mathcal{X} \subseteq \mathbb{R}^d$ (e.g., images) to an output space $\mathcal{Y} \subseteq \mathbb{R}$ (e.g., labels), given training points $\x_1,\x_2, ... \x_n$, and corresponding labels $\y_1,\y_2, ... \y_n$. We consider a neural network as our prediction model, which takes the form $\hat{\y_i} = \act (\func(\x_i, \para))\subseteq \mathbb{R}^c$, where $\func(\x_i, \para) = \para_1 \f_i \subseteq \mathbb{R}^c$ and $\f_i = \func_2(\x_i,\para_2) \subseteq \mathbb{R}^f$  is the last intermediate layer feature in the neural network for input $\x_i$. Note that $c$ is the number of classes, $f$ is the dimension of the feature, $\para_1$ is a matrix $\subseteq \mathbb{R}^{c\times f}$, and $\para_2$ is all the parameters to generate the last intermediate layer from the input $\x_i$. Thus $\para = \{\para_1, \para_2 \}$ are all the parameters of our neural network model. The parameterization above connotes splitting of the model as a feature model $\func_2(\x_i,\para_2)$ and a prediction network with parameters $\para_1$. Note that the feature model $\func_2(\x_i,\para_2)$ can be arbitrarily deep, or simply the identity function, so our setup above is applicable to general feed-forward networks. 

Our goal is to understand to what extent does one particular training point $\x_i$ affect the prediction $\hat{\y_t}$ of a test point $\x_t$ as well as the learned weight parameter $\para$. Let $L(\x,\y,\para)$ be the loss, and $\frac{1}{n}\sum_i^n L(\x_i,\y_i,\para)$ be the empirical risk. To indicate the form of a representer theorem, suppose we solve for the optimal parameters $\para^* = \arg \min_{\para} \left\{\frac{1}{n}\sum_i^n L(\x_i,\y_i,\para) + g(||\para||)\right\}$ for some non-decreasing $g$. We would then like our pre-activation predictions $\func(\x_t,\para)$ to have the decomposition: $\func(\x_t,\para^*) = \sum_i^n \alpha_i k(\x_t,\x_i)$. Given such a representer theorem, $\alf_i k(\x_t,\x_i)$ can be seen as the contribution of the training data $\x_i$ on the testing prediction $\func(\x_t,\para)$. However, such representer theorems have only been developed for non-parametric predictors, specifically where $\func$ lies in a reproducing kernel Hilbert space. Moreover, unlike the typical RKHS setting, finding a global minimum for the empirical risk of a deep network is difficult, if not impossible, to obtain. In the following, we provide a representer theorem that addresses these two points: it holds for deep neural networks, and for any stationary point solution.

\begin{theorem} \label{thm:thm1}
Let us denote the neural network prediction function by $\hat{\y_i} = \act(\func(\x_i, \para))$, where $\func(\x_i, \para) = \para_1 \f_i$ and $\f_i = \func_2(\x_i,\para_2)$. Suppose $\para^*$ is a stationary point of the optimization problem: $\arg \min_{\para} \left\{\frac{1}{n}\sum_i^n L(\x_i,\y_i,\para)) + g(||\para_1||)\right\}$, where $g(||\para_1||) = \lambda ||\para_1||^2$ for some $\lambda >0$. Then we have the decomposition: 
\[\func(\x_t,\para^*) = \sum_i^n  k(\x_t,\x_i,\alf_i),\] 
where $\alf_{i} = \frac{1}{-2 \lambda n} \frac{\partial L(\x_i,\y_i,\para)}{\partial \func(\x_i,\para)} $ and $k(\x_t,\x_i,\alf_i) = \alf_{i} \f_{i}^T \f_t$, which we call a representer value for $\x_i$ given $\x_t$.
\end{theorem}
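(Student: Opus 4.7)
The plan is to directly use the first-order stationarity condition with respect to $\para_1$. Since the regularizer $g(\|\para_1\|) = \lambda \|\para_1\|^2$ only depends on $\para_1$, the stationarity of $\para^*$ implies in particular that the gradient of the full objective with respect to $\para_1$ vanishes at $\para^*$. I would write out this condition explicitly, then solve algebraically for $\para_1^*$ as a sum over training points, and finally right-multiply by $\f_t$ to obtain the prediction decomposition.

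First I would compute $\partial L(\x_i,\y_i,\para)/\partial \para_1$ via the chain rule. Because $\func(\x_i,\para) = \para_1 \f_i$ is linear in $\para_1$ with $\f_i$ held fixed (as a function of $\para_2$), the chain rule yields
\[
\frac{\partial L(\x_i,\y_i,\para)}{\partial \para_1} \;=\; \frac{\partial L(\x_i,\y_i,\para)}{\partial \func(\x_i,\para)}\, \f_i^{T},
\]
an outer product of the $c$-dimensional gradient in the pre-activation output with the $f$-dimensional feature vector, giving a $c\times f$ matrix of the correct shape. Adding the gradient of $\lambda \|\para_1\|^2$, namely $2\lambda\, \para_1$, and averaging over $n$ training points, the stationarity condition at $\para^*$ reads
\[
\frac{1}{n}\sum_{i=1}^n \frac{\partial L(\x_i,\y_i,\para)}{\partial \func(\x_i,\para)}\,\f_i^{T} \;+\; 2\lambda\, \para_1^{*} \;=\; 0.
\]

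Second I would solve this for $\para_1^*$, obtaining
\[
\para_1^{*} \;=\; \sum_{i=1}^n \left(-\frac{1}{2\lambda n}\frac{\partial L(\x_i,\y_i,\para)}{\partial \func(\x_i,\para)}\right)\f_i^{T} \;=\; \sum_{i=1}^n \alf_i\, \f_i^{T},
\]
which is exactly the claimed definition of $\alf_i$. Right-multiplying by $\f_t$ gives the pre-activation prediction at the test point,
\[
\func(\x_t,\para^{*}) \;=\; \para_1^{*}\, \f_t \;=\; \sum_{i=1}^n \alf_i\, \f_i^{T}\f_t \;=\; \sum_{i=1}^n k(\x_t,\x_i,\alf_i),
\]
as claimed.

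There is no real obstacle here beyond bookkeeping: the decomposition is a one-line consequence of setting the gradient with respect to the final linear layer to zero. The only care needed is to keep the dimensions straight (treating $\partial L/\partial \func$ as a column vector in $\mathbb{R}^c$ so the outer product with $\f_i^T$ matches the shape of $\para_1 \in \mathbb{R}^{c\times f}$) and to note that the theorem does \emph{not} require $\para^*$ to be a global optimum — stationarity in $\para_1$ alone suffices, which is exactly why the result applies to deep networks where global optimization is intractable. Everything else in the objective (the $\para_2$ dependence inside $\f_i$) is frozen at $\para^*$ and plays no role in this particular first-order condition.
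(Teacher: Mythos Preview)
Your proposal is correct and follows essentially the same argument as the paper: set the gradient with respect to $\para_1$ to zero, apply the chain rule $\partial L/\partial \para_1 = (\partial L/\partial \func)\,\f_i^T$ to express $\para_1^*$ as $\sum_i \alf_i \f_i^T$, and then right-multiply by $\f_t$. Your additional remarks on dimension bookkeeping and on needing only stationarity in $\para_1$ (not global optimality) are accurate and align with the paper's framing.
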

\begin{proof}
Note that for any stationary point, the gradient of the loss with respect to $\para_1$ is equal to 0. We therefore have 
\begin{equation}\label{eqn:para_alpha}
\frac{1}{n}\sum_{i=1}^n\frac{\partial L(\x_i,\y_i, \para)}{\partial \para_{1}} + 2 \lambda \para^*_{1} = 0 \quad \Rightarrow \quad \para^*_{1} = -\frac{1}{2 \lambda n}\sum_{i=1}^n \frac{\partial L(\x_i,\y_i, \para)}{\partial \para_{1}} = \sum_{i=1}^n \alf_{i} \f_{i}^T
\end{equation}
where $\alf_{i} = -\frac{1}{2\lambda n}\frac{\partial L(\x_i,\y_i, \para)}{\partial \func(\x_i,\para)}$ by the chain rule. We thus have that
\begin{equation} 
\begin{split}
 \func(\x_t, \para^*) &= \para_1^* \f_t =
 \sum_{i=1}^n  k(\x_t,\x_i,\alf_i),
\end{split}
\label{eqn:decomposition}
\end{equation}
where $k(\x_t,\x_i,\alf_i) = \alf_{i} \f_{i}^T \f_t$ by simply plugging in the expression \eqref{eqn:para_alpha} into \eqref{eqn:decomposition}.
\end{proof}

We note that $\alf_{i}$ can be seen as the resistance for training example feature $\f_{i}$ towards minimizing the norm of the weight matrix $\para_{1}$. Therefore, $\alf_{i}$ can be used to evaluate the importance of the training data $\x_i$ have on $\para_{1}$. 
Note that for any class $j$, $\func(\x_t, \para^*)_j = \para_{1j}^* \f_t =
 \sum_{i=1}^n  k(\x_t,\x_i,\alf_i)_j$ holds by \eqref{eqn:decomposition}.
Moreover, we can observe that for $k(\x_t,\x_i,\alf_i)_j$ to have a significant value, two conditions must be satisfied: (a) $\alf_{ij}$ should have a large value, and (b) $\f_{i}^T \f_t$ should have a large value. Therefore, we interpret the pre-activation value $\func(\x_t,\para)_j$ as a weighted sum for the feature similarity $\f_{i}^T \f_t$ with the weight $\alf_{ij}$. When $\f_t$ is close to $\f_{i}$ with a large positive weight $\alf_{ij}$, the prediction score for class $j$ is increased. On the other hand, when $\f_t$ is close to $\f_{i}$ with a large negative weight $\alf_{ij}$, the prediction score for class $j$ is then decreased. 

We can thus interpret the training points with negative representer values as inhibitory points that suppress the activation value, and those with positive representer values as excitatory examples that does the opposite. We demonstrate this notion with examples further in Section~\ref{subsec:visualize}. We note that such excitatory and inhibitory points provide a richer understanding of the behavior of the neural network: it provides insight both as to why the neural network prefers a particular prediction, as well as \emph{why it does not}, which is typically difficult to obtain via other sample-based explanations.

\subsection{Training an Interpretable Model by Imposing L2 Regularization.} \label{sec:first}
Theorem \ref{thm:thm1} works for any model that performs a linear matrix multiplication before the activation $\act$, which is quite general and can be applied on most neural-network-like structures. By simply introducing a L2 regularizer on the weight with a fixed $\lambda >0$, we can easily decompose the pre-softmax prediction value as some finite linear combinations of a function between the test and train data. 
We now state our main algorithm. First we solve the following optimization problem:
\begin{equation} \label{eq:min1}
\para^* = \arg \min_{\para} \frac{1}{n}\sum_i^n L(\y_i, \func(\x_i, \para)) + \lambda ||\para_1||^2.
\end{equation}
Note that for the representer point selection to work, we would need to achieve a stationary point with high precision. In practice, we find that using a gradient descent solver with line search or LBFGS solver to fine-tune after converging in SGD can achieve highly accurate stationary point. Note that we can perform the fine-tuning step only on $\para_1$, which is usually efficient to compute. We can then decompose $\func(\x_t, \para) = \sum_i^n  k(\x_t,\x_i,\alf_i)$ by Theorem \ref{thm:thm1} for any arbitrary test point $\x_t$, where $k(\x_t,\x_i,\alf_i)$ is the contribution of training point $\x_i$ on the pre-softmax prediction $\func(\x_t, \para)$. We emphasize that imposing L2 weight decay is a common practice to avoid overfitting for deep neural networks, which does not sacrifice accuracy while achieving a more interpretable model.

 \subsection{Generating Representer Points for a Given Pre-trained Model.} \label{sec:pre}
 
We are also interested in finding representer points for a given model $\Phi({\para_{given}})$ that has already been trained, potentially without imposing the L2 regularizer. While it is possible to add the L2 regularizer and retrain the model, the retrained model may converge to a different stationary point, and behave differently compared to the given model, in which case we cannot use the resulting representer points as explanations. Accordingly, we learn the parameters $\para$ while imposing the L2 regularizer, but under the additional constraint that $\Phi(\x_i,{\para})$ be close to $\Phi(\x_i,{\para_{given}})$. In this case, our learning objective becomes $\func(\x_i, \para_{given})$ instead of $y_i$, and our loss $L(\x_i,y_i, \para)$ can be written as $L(\func(\x_i, \para_{given}),\func(\x_i,\para))$.

\begin{defn}
We say that a convex loss function $L(\func(\x_i, \para_{given}),\func(\x_i,\para))$ is ``suitable'' to an activation function $\act$, if it holds that for any $ \para^* \in \arg \min_{\para}   L(\func(\x_i, \para_{given}),\func(\x_i,\para))$, we have $\act (\Phi({\x_i,\para^*}))$ = $\act(\Phi(\x_i,\para_{given}))$.
\label{DefnSuitable}
\end{defn}

Assume that we are given such a loss function $L$ that is ``suitable to'' the activation function $\act$. We can then solve the following optimization problem:
\begin{equation} \label{eq:5}
\para^* \in \arg \min_{\para} \left\{\frac{1}{n}\sum_i^n L(\func(\x_i, \para_{given}),\func(\x_i,\para)) + \lambda ||\para_1||^2\right\}.
\end{equation}
 
The optimization problem can be seen to be convex under the assumptions on the loss function. The parameter $\lambda >0$ controls the trade-off between the closeness of $\act(\Phi(\X,{\para}))$ and $\act(\Phi(\X,{\para_{given}}))$, and the computational cost. For a small $\lambda$, $\act(\Phi(\X,{\para}))$ could be arbitrarily close to $\act(\Phi(\X,{\para_{given}}))$, while the convergence time may be long. We note that the learning task in Eq. \eqref{eq:5} can be seen as learning from a teacher network $\para_{given}$ and imposing a regularizer to make the student model $\para$ capable of generating representer points. In practice, we may take $\para_{given}$ as an initialization for $\para$ and perform a simple line-search gradient descent with respect to $\para_1$ in \eqref{eq:5}. In our experiments, we discover that the training for \eqref{eq:5} can converge to a stationary point in a short period of time, as demonstrated in Section~\ref{sec:time}.

We now discuss our design for the loss function that is mentioned in $\eqref{eq:5}$. When $\act$ is the softmax activation, we choose the softmax cross-entropy loss, which computes the cross entropy between $\act(\Phi(\x_i,{\para_{given}}))$ and $\act(\Phi(\x_i,{\para}))$ for $L_\text{softmax}(\func(\x_i, \para_{given}),\func(\x_i,\para))$. When $\act$ is ReLU activation, we choose $L_\text{ReLU}(\func(\x_i, \para_{given}),\func(\x_i,\para)) = \frac{1}{2}  \max(\func(\x_i, \para),0) \odot \func(\x_i, \para) - \max(\func(\x_i, \para_{given}),0) \odot \func(\x_i, \para)$, where $\odot$ is the element-wise product. In the following Proposition, we show that $L_\text{softmax}$ and $L_\text{ReLU}$ are convex, and satisfy the desired suitability property in Definition~\ref{DefnSuitable}. The proof is provided in the supplementary material. 

\begin{proposition} \label{lm:lm0}
The loss functions $L_\text{softmax}$ and $L_\text{ReLU}$ are both convex in $\para_1$. Moreover, $L_\text{softmax}$ is ``suitable to'' the $\text{softmax}$ activation, and $L_\text{ReLU}$ is ``suitable to'' the $\text{ReLU}$ activation, following Definition~\ref{DefnSuitable}.
\end{proposition}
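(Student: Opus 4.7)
The plan is to work at the ``logit'' level $z_i := \func(\x_i,\para) = \para_1 \f_i$, which is an affine function of $\para_1$. Both convexity and suitability of a loss of the form $L(\func(\x_i,\para_{given}),\func(\x_i,\para))$ then follow from the corresponding statements about $L$ as a function of $z_i$, together with the observation that $\para = \para_{given}$ realizes $z_i = z_i^* := \func(\x_i,\para_{given})$, so any $\para$-space minimizer must also be a $z$-space minimizer.

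For $L_\text{softmax}$, I would rewrite the cross entropy between $\act(z_i^*)$ and $\act(z_i)$ as
\[
L_\text{softmax}(z_i^*,z_i) \;=\; -\sum_j \act(z_i^*)_j\, z_{ij} \;+\; \log\!\sum_k \exp(z_{ik}),
\]
using $\sum_j \act(z_i^*)_j = 1$. The first term is linear in $z_i$ and the second is the log-sum-exp, a standard convex function, so $L_\text{softmax}$ is convex in $z_i$ and hence in $\para_1$. Taking gradients gives $\nabla_{z_i} L_\text{softmax} = \act(z_i) - \act(z_i^*)$, so every minimizer in $z$-space satisfies $\act(z_i) = \act(z_i^*)$; equivalently (via Gibbs' inequality) the global minimum equals the entropy $H(\act(z_i^*))$ and is attained exactly on this set. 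This yields suitability.

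For $L_\text{ReLU}$, the key identity is $\max(z,0)\cdot z = \max(z,0)^2$ coordinatewise, which reduces the loss to a sum of per-coordinate terms
\[
\ell(z_{ij}) \;=\; \tfrac{1}{2}\max(z_{ij},0)^2 \;-\; \max(z_{ij}^*,0)\, z_{ij}.
\]
The function $\tfrac{1}{2}\max(\cdot,0)^2$ is $C^1$ with derivative $\max(\cdot,0)$, which is non-decreasing, hence convex; adding a linear term preserves convexity, and summing over coordinates and samples preserves it as well. Setting $\ell'(z_{ij}) = \max(z_{ij},0) - \max(z_{ij}^*,0) = 0$ gives the optimality condition $\act(z_i) = \act(z_i^*)$ with $\act = \text{ReLU}$, establishing suitability.

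I expect no real obstacle. The only point worth being careful about is that the stationarity argument for $L_\text{ReLU}$ uses the non-smooth function $\max(z,0)$; however, because $\tfrac{1}{2}\max(z,0)^2$ is continuously differentiable at $z=0$ (both one-sided derivatives equal $0$), the subdifferential is single-valued and the ordinary first-order condition suffices, so no case analysis around the ReLU kink is required. The step from $z$-space minimization to $\para$-space minimization is immediate since $\para_{given}$ attains the $z$-space optimum, so every $\para^* \in \arg\min_{\para} L(\func(\x_i,\para_{given}),\func(\x_i,\para))$ must give $\func(\x_i,\para^*)$ lying in the $z$-space arg min, hence $\act(\func(\x_i,\para^*)) = \act(\func(\x_i,\para_{given}))$.
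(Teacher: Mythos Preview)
Your proposal is correct. For $L_\text{softmax}$, your argument and the paper's are essentially the same: both end up at $\nabla_{z}L_\text{softmax}=\act(z)-\act(z^*)$ and conclude suitability from the vanishing gradient; the paper reaches this via the chain rule from the $\para_1$-stationarity condition (with the side remark that $\f_i\neq 0$), whereas you work in $z$-space first and then lift back to $\para$-space using the fact that $\para_{given}$ already realizes the $z$-optimum. For $L_\text{ReLU}$, your route is genuinely cleaner than the paper's: the paper performs an explicit case split on whether $\max(\func_j(\x_i,\para_{given}),0)=0$ or $>0$ and analyzes the minimizers separately in each case, while your observation that $\tfrac{1}{2}\max(z,0)^2$ is $C^1$ with derivative $\max(z,0)$ yields the first-order condition $\max(z_{ij},0)=\max(z_{ij}^*,0)$ in one stroke, making the case analysis unnecessary. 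You also supply the convexity arguments (log-sum-exp for $L_\text{softmax}$; non-decreasing derivative of $\tfrac{1}{2}\max(\cdot,0)^2$ for $L_\text{ReLU}$) that the paper simply asserts as ``easily checked.''
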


As a sanity check, we perform experiments on the CIFAR-10 dataset~\cite{krizhevsky2009learning} with a pre-trained VGG-16 network~\cite{simonyan2014very}. We first solve \eqref{eq:5} with loss $L_\text{softmax}(\func(\x_i, \para),\func(\x_i,\para_{given}))$ for $\lambda = 0.001$, and then calculate $\func(\x_t, \para^*) =
 \sum_{i=1}^n  k(\x_t,\x_i,\alf_i)$ as in \eqref{eqn:decomposition} for all train and test points. We note that the computation time for the whole procedure only takes less than a minute, given the pre-trained model. We compute the Pearson correlation coefficient between the actual output $\act(\Phi(\x_t,{\para}))$ and the predicted output $\act(\sum_{i=1}^n  k(\x_t,\x_i,\alf_i))$ for multiple points and plot them in Figure~\ref{fig:correlation}. The correlation is almost 1 for both train and test data, and most points lie at the both ends of $y=x$ line. 

We note that Theorem \ref{thm:thm1} can  be applied to any hidden layer with ReLU activation by defining a sub-network from input $\x$ and the output being the hidden layer of interest. The training could be done in a similar fashion by replacing $L_\text{softmax}$ with $L_\text{ReLU}$. In general, any activation can be used with a derived "suitable loss".

\begin{figure}[t!]
    \centering
    \includegraphics[width=0.5\textwidth]{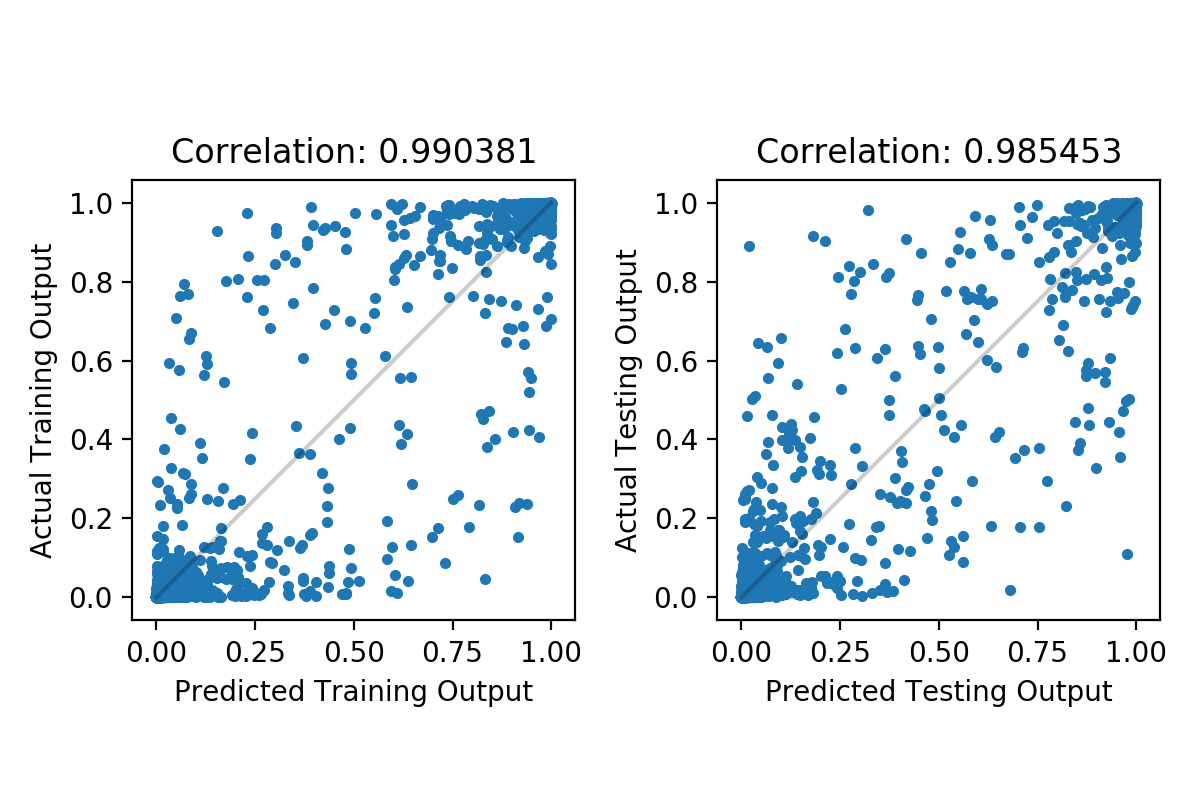}
    \caption{ Pearson correlation between the actual and approximated softmax output (expressed as a linear combination) for train (left) and test (right) data in CIFAR-10 dataset. The correlation is almost 1 for both cases.}
    \label{fig:correlation}
\end{figure}


\section{Experiments}

We perform a number of experiments with multiple datasets and evaluate our method's performance and compare with that of the influence functions.\footnote{Source code available at \href{https://github.com/yankeesrules/Representer_Point_Selection}{\texttt{github.com/yankeesrules/Representer\_Point\_Selection}}.} The goal of these experiments is to demonstrate that selecting the representer points is efficient and insightful in several ways. Additional experiments discussing the differences between our method and the influence function are included in the supplementary material. 

\subsection{Dataset Debugging}  

\begin{figure}[h]
\begin{center}
\includegraphics[width=0.7\textwidth]{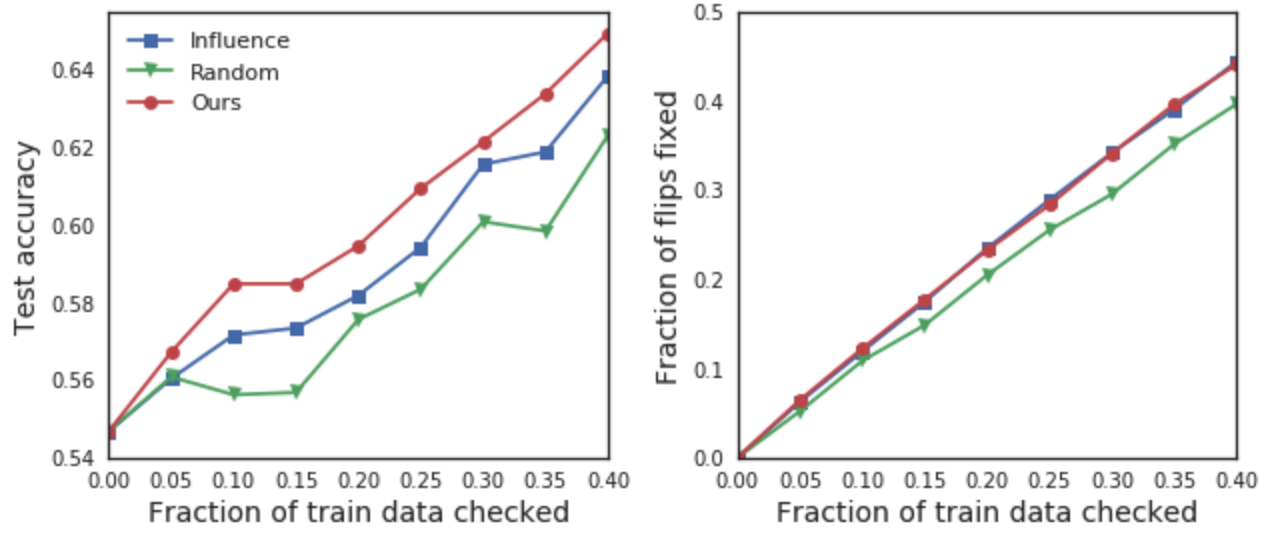}
\end{center}
\caption{Dataset debugging performance for several methods. By inspecting the training points using the representer value, we are able to recover the same amount of mislabeled training points as the influence function (right) with the highest test accuracy compared to other methods (left).}
\label{fig:inputcheck}
\end{figure}

To evaluate the influence of the samples, we consider a scenario where humans need to inspect the dataset quality to ensure an improvement of the model's performance in the test data. Real-world data is bound to be noisy, and the bigger the dataset becomes, the more difficult it will be for humans to look for and fix mislabeled data points. It is crucial to know which data points are more important than the others to the model so that prioritizing the inspection can facilitate the debugging process.

To show how well our method does in dataset debugging, we run a simulated experiment on CIFAR-10 dataset~\cite{lecun1998gradient} with a task of binary classification with logistic regression for the classes automobiles and horses. The dataset is initially corrupted, where 40 percent of the data has the labels flipped, which naturally results in a low test accuracy of $0.55$. The simulated user will check some fraction of the train data based on the order set by several metrics including ours, and fix the labels. With the corrected version of the dataset, we retrain the model and record the test accuracies for each metrics. For our method, we train an explainable model by mimimizing \eqref{eq:min1} as explained in section \ref{sec:first}. The L2 weight decay is set to $1e^{-2}$ for all methods for fair comparison. All experiments are repeated for 5 random splits and we report the average result. In Figure~\ref{fig:inputcheck} we report the results for four different metrics: ``ours'' picks the points with bigger $|\alpha_{ij}|$ for training instance $i$ and its corresponding label $j$; ``influence'' prioritizes the training points with bigger influence function value; and ``random'' picks random points. We observe that our method recovers the same amount of training data as the influence function while achieving higher testing accuracy. Nevertheless, both methods perform better than the random selection method.

\subsection{Excitatory (Positive) and Inhibitory (Negative) Examples}  
\label{subsec:visualize}

We visualize the training points with high representer values (both positive and negative) for some test points in Animals with Attributes (AwA) dataset~\cite{xian2017zero} and compare the results with those of the influence functions. We use a pre-trained Resnet-50~\cite{he2016deep} model and fine-tune on the AwA dataset to reach over 90 percent testing accuracy. We then generate representer points as described in section \ref{sec:pre}. For computing the influence functions, just as described in \cite{koh2017understanding}, we froze all top layers of the model and trained the last layer. We report top three points for two test points in the following Figures \ref{fig:comparison_top3_similar} and \ref{fig:comparison_top3_better}. In Figure~\ref{fig:comparison_top3_similar}, which is an image of three grizzly bears, our method correctly returns three images that are in the same class with similar looks, similar to the results from the influence function. The positive examples excite the activation values for a particular class and supports the decision the model is making. For the negative examples, just like the influence functions, our method returns images that look like the test image but are labeled as a different class. In Figure~\ref{fig:comparison_top3_better}, for the image of a rhino the influence function could not recover useful training points, while ours does, including the similar-looking elephants or zebras which might be confused as rhinos, as negatives. The negative examples work as inhibitory examples for the model -- they suppress the activation values for a particular class of a given test point because they are in a different class despite their striking similarity to the test image. Such inhibitory points thus provide a richer understanding, even to machine learning experts, of the behavior of deep neural networks, since they explicitly indicate training points that lead the network away from a particular label for the given test point. More examples can be found in the supplementary material. 

\begin{figure}[htb]
\begin{center}
\includegraphics[width=0.9\textwidth]{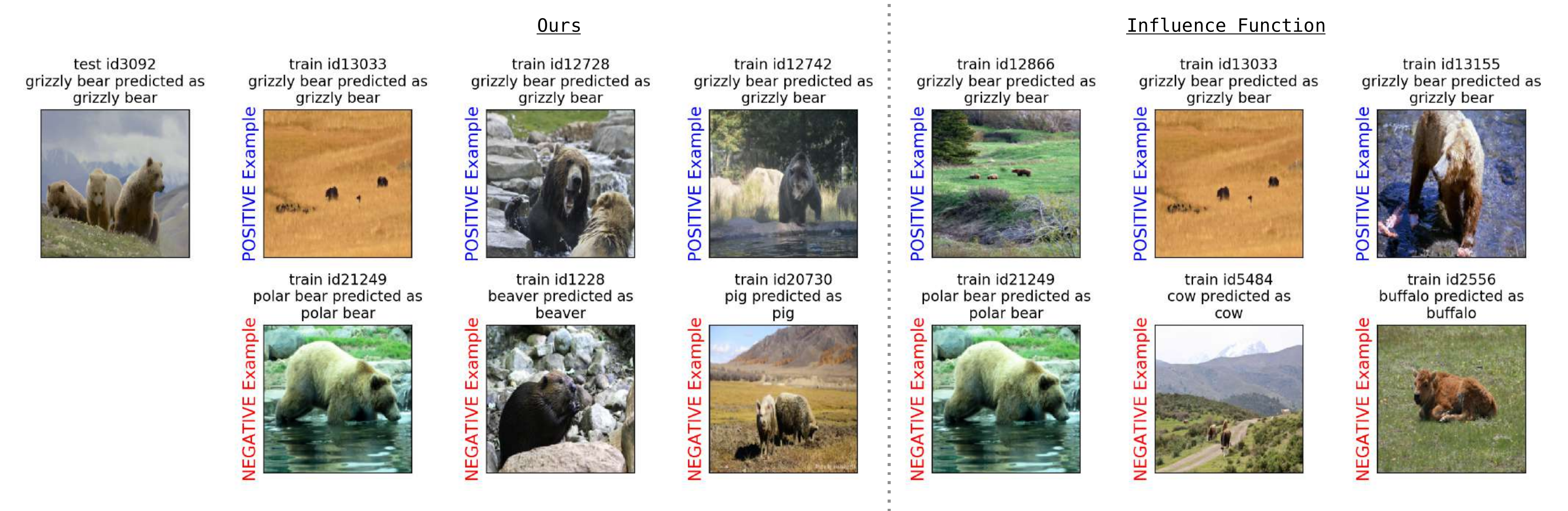}
\end{center}
\caption{Comparison of top three positive and negative influential training images for a test point (left-most column) using our method (left columns) and influence functions (right columns).}
\label{fig:comparison_top3_similar}
\end{figure}
\begin{figure}[htb]
\begin{center}
\includegraphics[width=0.9\textwidth]{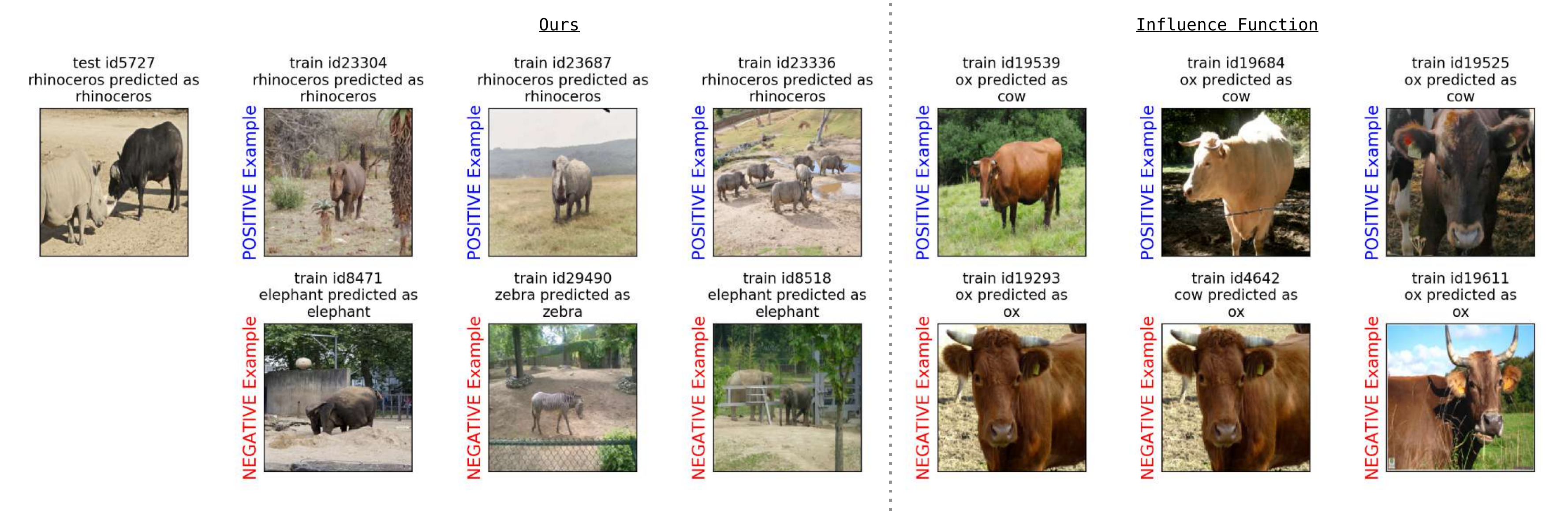}
\end{center}
\caption{Here  we can observe that our method provides clearer positive and negative examples while the influence function fails to do so.}
\label{fig:comparison_top3_better}
\end{figure}

\subsection{Understanding Misclassified Examples}
\label{subsec:missclassification}

The representer values can be used to understand the model's mistake on a test image. Consider a test image of an antelope predicted as a deer in the left-most panel of Figure~\ref{fig:modeldebugging}. Among 181 test images of antelopes, the total number of misclassified instances is 15, among which 12 are misclassified as deer. All of those 12 test images of antelopes had the four training images shown in Figure~\ref{fig:modeldebugging} among the top inhibitory examples. Notice that we can spot antelopes even in the images labeled as zebra or elephant. Such noise in the labels of the training data confuses the model -- while the model sees elephant \textit{and} antelope, the label forces the model to focus on just the elephant. The model thus learns to inhibit the antelope class given an image with small antelopes and other large objects. This insight suggests for instance that we use multi-label prediction to train the network, or perhaps clean the dataset to remove such training examples that would be confusing to humans as well. Interestingly, the model makes the same mistake (predicting deer instead of antelope) on the second training image shown (third from the left of Figure~\ref{fig:modeldebugging}), and this suggests that for the training points, we should expect most of the misclassifications to be deer as well. And indeed, among 863 training images of antelopes, 8 are misclassified, and among them 6 are misclassified as deer. 

\begin{figure}[h!]
\begin{center}
\includegraphics[width=0.9\textwidth]{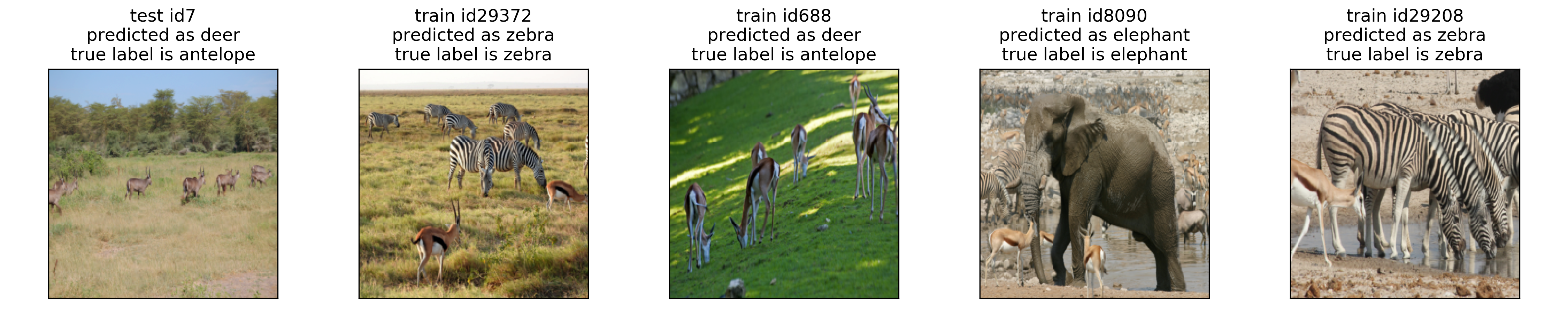}
\end{center}
\caption{A misclassified test image (left) and the set of four training images that had the most negative representer values for almost all test images in which the model made the same mistakes. The negative influential images all have antelopes in the image despite the label being a different animal.}
\label{fig:modeldebugging}
\end{figure}

\subsection{Sensitivity Map Decomposition} 

From Theorem~\ref{thm:thm1}, we have seen that the pre-softmax output of the neural network can be decomposed as the weighted sum of the product of the training point feature and the test point feature, or $\func(\x_t,\para^*) = \sum_i^n  \alpha_i \mathbf{f}_i^T \mathbf{f}_t$. If we take the gradient with respect to the test input $\x_t$ for both sides, we get $\frac{\partial \func(\x_t,\para^*)}{\partial \x_t} = \sum_i^n  \alpha_i \frac{\partial \mathbf{f}_i^T \mathbf{f}_t}{\partial \x_t}$. Notice that the LHS is the widely-used notion of sensitivity map (gradient-based attribution), and the RHS suggests that we can decompose this sensitivity map into a weighted sum of sensitivity maps that are native to each $i$-th training point. This gives us insight into how sensitivities of training points contribute to the sensitivity of the given test image. 

In Figure~\ref{fig:sensitivity_decomposition}, we demonstrate two such examples, one from the class zebra and one from the class moose from the AwA dataset. The first column shows the test images whose sensitivity maps we wish to decompose. For each example, in the following columns we show top four influential representer points in the the top row, and visualize the decomposed sensitivity maps in the bottom. We used SmoothGrad~\cite{smilkov2017smoothgrad} to obtain the sensitivity maps.

For the first example of a zebra, the sensitivity map on the test image mainly focuses on the face of the zebra. This means that infinitesimally changing the pixels around the face of the zebra would cause the greatest change in the neuron output.  Notice that the focus on the head of the zebra is distinctively the strongest in the fourth representer point (last column) when the training image manifests clearer facial features compared to other training points. For the rest of the training images that are less demonstrative of the facial features, the decomposed sensitivity maps accordingly show relatively higher focus on the background than on the face. 
For the second example of a moose, a similar trend can be observed -- when the training image exhibits more distinctive bodily features of the moose than the background (first, second, third representer points), the decomposed sensitivity map highlights the portion of the moose on the test image more compared to training images with more features of the background (last representer point). This provides critical insight into the contribution of the representer points towards the neuron output that might not be obvious just from looking at the images itself. 

\begin{figure}[h]
    \centering
    \includegraphics[width=0.85\textwidth]{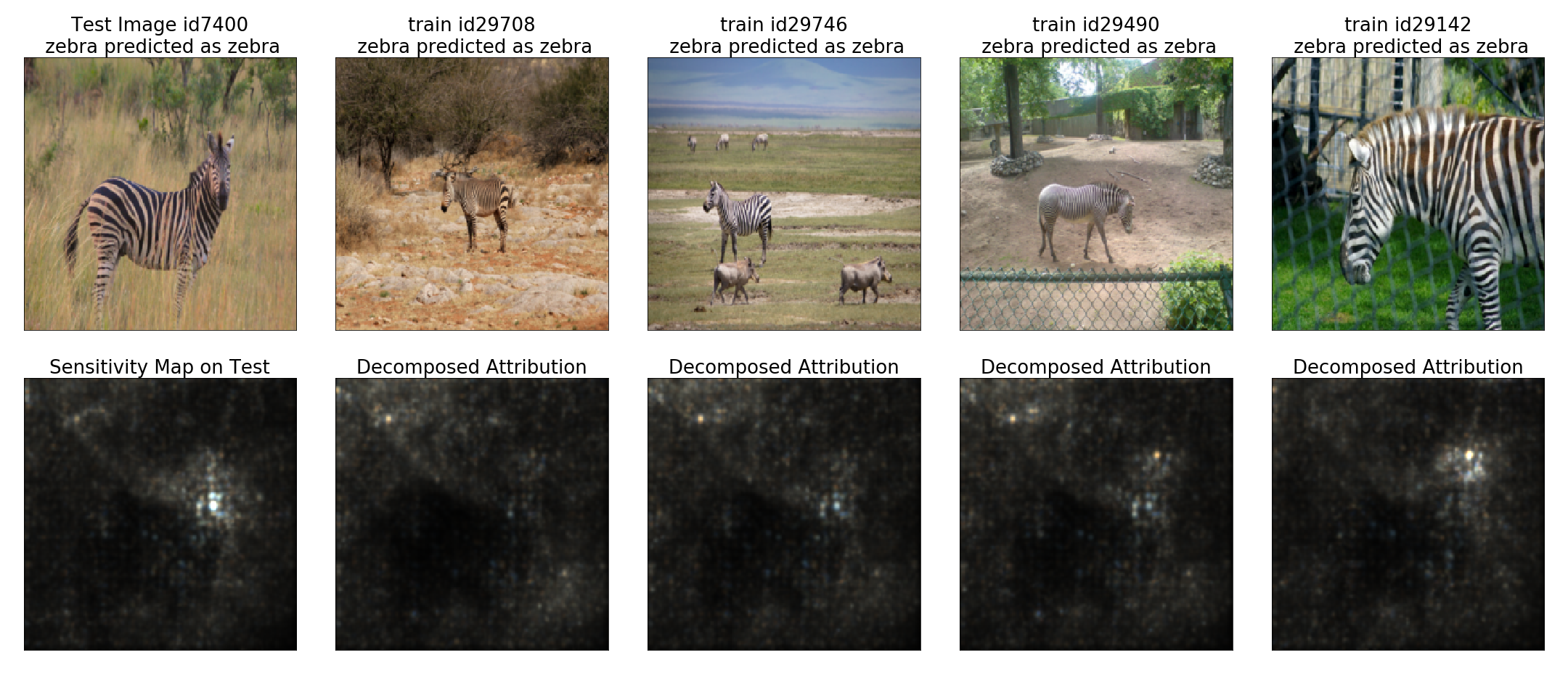}
     \includegraphics[width=0.85\textwidth]{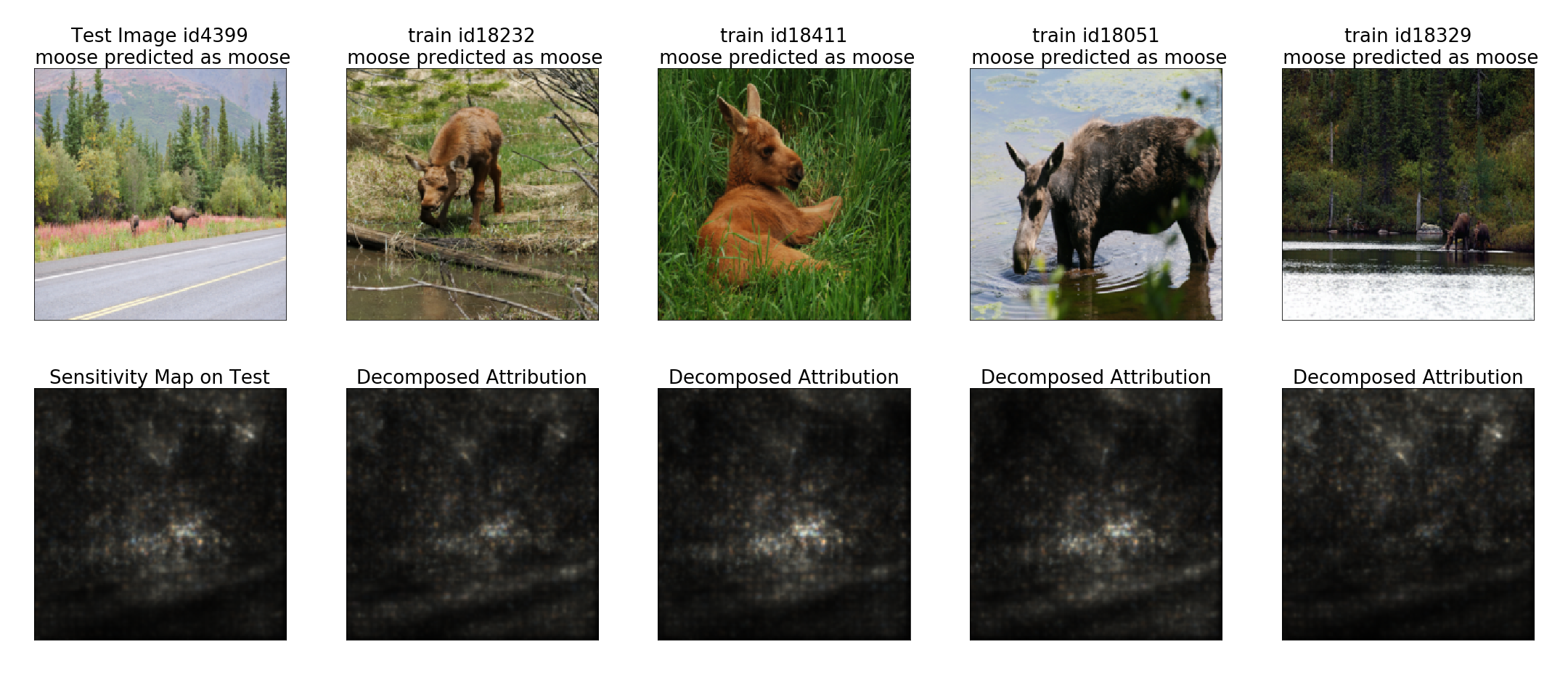}
    \caption{Sensitivity map decomposition using representer points, for the class zebra (above two rows) and moose (bottom two rows). The sensitivity map on the test image in the first column can be readily seen as the weighted sum of the sensitivity maps for each training point. The less the training point displays superious features from the background and more of the features related to the object of interest, the more focused the decomposed sensitivity map corresponding to the training point is at the region the test sensitivity map mainly focuses on.}
    \label{fig:sensitivity_decomposition}
\end{figure}

\subsection{Computational Cost and Numerical Instabilities}
\label{sec:time}
Computation time is particularly an issue for computing the influence function values~\citep{koh2017understanding} for a large dataset, which is very costly to compute for each test point. We randomly selected a subset of test points, and report the comparison of the computation time in Table~\ref{table:time} measured on CIFAR-10 and AwA datasets. We randomly select 50 test points to compute the values for all train data, and recorded the average and standard deviation of computation time. Note that the influence function does not need the fine-tuning step when given a pre-trained model, hence the values being 0, while our method first optimizes for $\para^*$ using line-search then computes the representer values. However, note that the fine-tuning step is a one time cost, while the computation time is spent for every testing image we analyze. Our method significantly outperforms the influence function, and such advantage will favor our method when a larger number of data points is involved. In particular, our approach could be used for \emph{real-time explanations} of test points, which might be difficult with the influence function approach.

\begin{table}[h]
\centering
\begin{tabular}{c  c  c  c  c } 
 \hline
 & \multicolumn{2}{c}{Influence Function} &  \multicolumn{2}{c}{Ours}\\
 \hline\hline
 Dataset & Fine-tuning & Computation & Fine-tuning & Computation \\
 \hline
 CIFAR-10 & 0 & $267.08 \pm 248.20$ & $7.09 \pm	0.76$ & $0.10 \pm 0.08$ \\ 
 \hline
 AwA & 0 & $172.71 \pm 32.63$ & $12.41 \pm 2.37$ & $0.19 \pm 0.12$ \\
 \hline
\end{tabular}
\vspace{2mm}
\caption{Time required for computing an influence function / representer value for all training points and a test point in seconds. The computation of Hessian Vector Products for influence function alone took longer than our combined computation time.}
\label{table:time}
\vspace{-3mm}
\end{table}
While ranking the training points according to their influence function values, we have observed numerical instabilities, more discussed in the supplementary material. For CIFAR-10, over 30 percent of the test images had all zero training point influences, so influence function was unable to provide positive or negative influential examples. The distribution of the values is demonstrated in Figure~\ref{fig:inf_vals_zero}, where we plot the histogram of the maximum of the absolute values for each test point in CIFAR-10. Notice that over 300 testing points out of 1,000 lie in the first bin for the influence functions (right). We checked that all data in the first bin had the exact value of 0. Roughly more than 200 points lie in range $[10^{-40}, 10^{-28}]$, the values which may create numerical instabilities in computations. On the other hand, our method (left) returns non-trivial and more numerically stable values across all test points. 

\begin{figure}[h]
\begin{center}
\vspace{-2mm}
\includegraphics[width=0.7\textwidth]{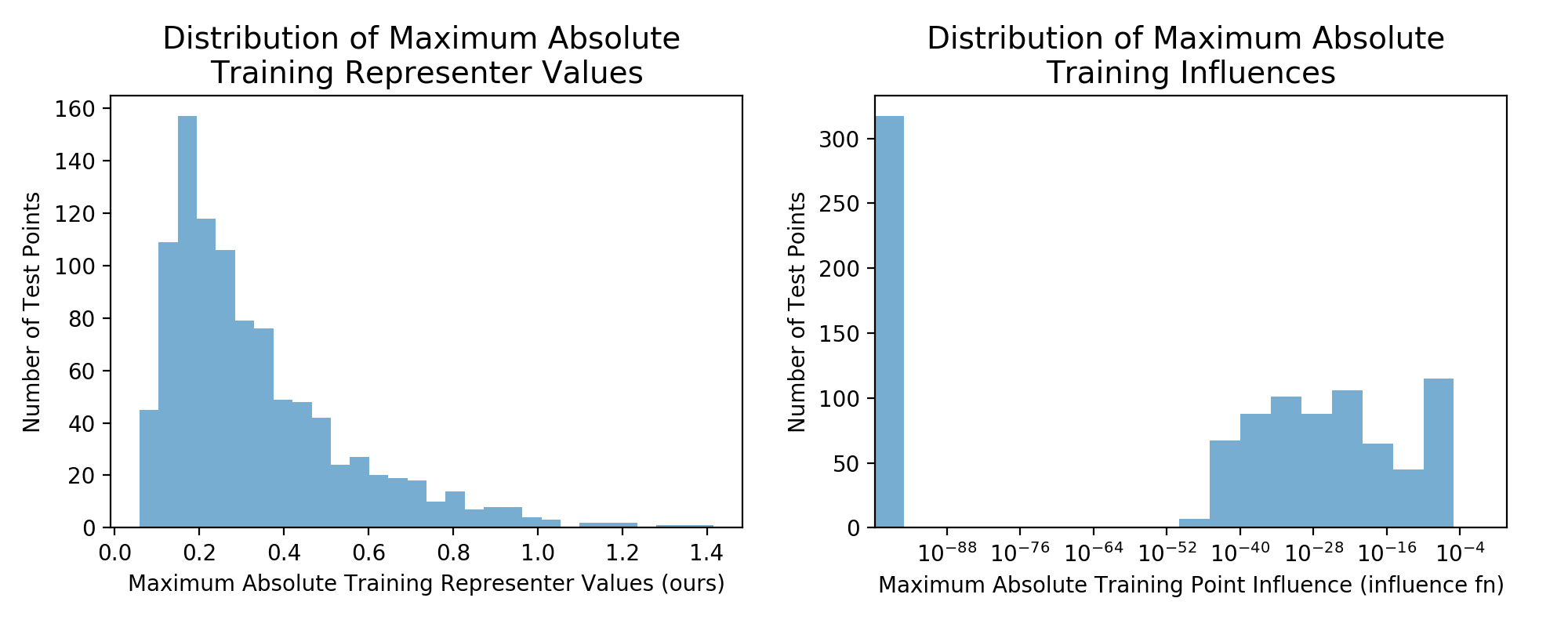}
\end{center}
\caption{The distribution of influence/representer values for a set of randomly selected 1,000 test points in CIFAR-10. While ours have more evenly spread out larger values across different test points (left), the influence function values can be either really small or become zero for some points, as seen in the left-most bin (right).}
\label{fig:inf_vals_zero}
\vspace{-2mm}
\end{figure}

\section{Conclusion and Discussion}

In this work we proposed a novel method of selecting representer points, the training examples that are influential to the model's prediction. To do so we introduced the modified representer theorem that could be generalized to most deep neural networks, which allows us to linearly decompose the prediction (activation) value into a sum of representer values. The optimization procedure for learning these representer values is tractable and efficient, especially when compared against the influence functions proposed in \cite{koh2017understanding}. We have demonstrated our method's advantages and performances on several large-scale models and image datasets, along with some insights on how these values allow the users to understand the behaviors of the model. 

An interesting direction to take from here would be to use the representer values for data poisoning just like in \cite{koh2017understanding}. Also to truly see if our method is applicable to several domains other than image dataset with different types of neural networks, we plan to extend our method to NLP datasets with recurrent neural networks. The result of a preliminary experiment is included in the supplementary material.

\subsection*{Acknowledgements}

We acknowledge the support of DARPA via FA87501720152, and Zest Finance.

\bibliography{ref}
\bibliographystyle{unsrtnat}

\newpage
\pagebreak
\setcounter{page}{1}
\appendix
\section{Proof of Proposition \ref{lm:lm0}}
\label{appdx:propproof}
\begin{proof} 
The convexity can be checked easily. If $\para^* \in \arg \min_{\para}  L_\text{softmax}(\func(\x_i, \para_{given}),\func(\x_i,\para))$, by the first order condition we have 
\begin{equation}
    \frac{\partial L_\text{softmax}(\func(\x_i, \para_{given}),\func(\x_i, \para^*))}{\partial \para_1^*} = \mathbf{0}
\end{equation}
By chain rule we obtain 
\begin{equation}
   \frac{\partial L_\text{softmax}(\func(\x_i, \para_{given}),\func(\x_i, \para^*))}{\partial \func(\x_i, \para^*)} \frac{\partial \func(\x_i, \para^*)}{\partial \para_1^*} = \mathbf{0},
\end{equation}and thus we have 

\begin{equation}\label{eq:first_order}
   \left(\frac{\partial L_\text{softmax}(\func(\x_i, \para_{given}),\func(\x_i, \para^*))}{\partial \func(\x_i, \para^*)}\right) \f_i^T = \mathbf{0},
\end{equation}
When $\f_i$ is not a zero vector, this reduces to 
\begin{equation}\label{eq:first_order}
   \frac{\partial L_\text{softmax}(\func(\x_i, \para_{given}),\func(\x_i, \para^*))}{\partial \func(\x_i, \para^*)}  = \mathbf{0},
\end{equation}
and we show that $\act(\Phi(\x_i,{\para}))-\act(\Phi(\x_i,{\para}_{given})) = \mathbf{0}$. As a result, $L_\text{softmax}$ is ``suitable to'' the $\text{softmax}$ activation.\\ \\
If $\para^* \in \arg \min_{\para}  L_\text{ReLU}(\func(\x_i, \para_{given}),\func(\x_i,\para))$, we can rewrite $L_\text{ReLU}$ as
\begin{equation}\label{eq:relu}
\begin{split}
  & L_\text{ReLU}(\func(\x_i, \para_{given}),\func(\x_i,\para))  \\ &= \frac{1}{2}  \max(\func(\x_i, \para),0) \odot \func(\x_i, \para) - \func(\x_i, \para_{given}) \odot \func(\x_i, \para)  
  \\&= \sum_{j=0}^c \frac{1}{2}  \max(\func_j(\x_i, \para),0) \func_j(\x_i, \para) - \func_j(\x_i, \para_{given})  \func_j(\x_i, \para),
\end{split}
\end{equation}
and we note that $\para_{1j}$, the $j$-th row for $\para_{1}$, is only related to $\func_j(\x_i, \para)$. Therefore, we have $\para_{1j}^* \in \arg \min_{\para_{1j}}  L_\text{ReLU}(\func_j(\x_i, \para_{given}),\func_j(\x_i,\para))$. We now consider the cases where $\max(\func_j(\x_i, \para_{given}),0) = 0 $ and $\max(\func_j(\x_i, \para_{given}),0) > 0 $.

When $\max(\func_j(\x_i, \para_{given}),0) = 0 $, 
\begin{equation}
L_\text{ReLU}(\func_j(\x_i, \para_{given}),\func_j(\x_i,\para))  = \frac{1}{2}  \max(\func_j(\x_i, \para),0) \cdot \func_j(\x_i, \para).
\end{equation}
$\para_{1j}$ obtains the minimum when $\max(\func_j(\x_i, \para),0) = 0$, therefore $\act(\func_j(\x_i, \para)) = \act(\func_j(\x_i, \para_{given}))$.

When $\max(\func_j(\x_i, \para_{given}),0) > 0 $, 
\begin{equation*}
L_\text{ReLU}(\func_j(\x_i, \para_{given}),\func_j(\x_i,\para))  = \frac{1}{2}  \max(\func_j(\x_i, \para),0) \cdot \func_j(\x_i, \para)- \func_j(\x_i, \para_{given}) \cdot \func_j(\x_i, \para).
\end{equation*}
For $\func_j(\x_i, \para) \leq 0$, the minimum for $L_\text{ReLU}$ is 0. For $\func_j(\x_i, \para) > 0$, the minimum for $L_\text{ReLU}$ is $-\frac{1}{2}  \func_j(\x_i, \para_{given})^2$ only if $\func_j(\x_i, \para) = \func_j(\x_i, \para_{given})$. Therefore, the minimum is reached again when $\act(\func_j(\x_i, \para)) = \act(\func_j(\x_i, \para_{given}))$.
As a result, $L_\text{ReLU}$ is ``suitable to'' the $\text{ReLU}$ activation.
\end{proof}

\section{Relationship with the Influence Function}
\label{appdx:infandours}

In this section, we compare the behaviors of our method and the influence functions~\cite{koh2017understanding}. Recall that for a training point $\x_i$ and a test point $\x_t$, the influence function value is computed in terms of the gradients/hessians of the loss, and it reflects how the loss at a test point $\x_t$ will change when the training point $\x_i$ is perturbed (weighted more/less). Because the influence function is defined in terms of the loss, its value can easily become arbitrarily close to zero if the the loss is flat in some region. On the other hand, our representer values are computed using the neurons' activation values, which may result in comparatively larger values in general. We verify this in a toy dataset in 2-D with a large margin shown in Figure~\ref{fig:toy}.

We train a multi-layer perceptron with ReLU activations as a binary classifier. The influential points, we would expect, are the points that are closer to the decision boundary. As shown on the left of Figure~\ref{fig:toy}, the influence function does not provide positive or negative examples from each class for the given test point in green square because all training points have exactly zero influence function values, marked with cyan crosses. However, our method provides correct positive and negative points near the decision boundary as we can see from the rightmost panel of Figure~\ref{fig:toy}.

In Figure~\ref{fig:eucvsours}, we compare the behaviors of several metrics (Euclidean distance, influence function, representer value) for selecting training points that are most similar to a test point from CIFAR-10 dataset. We use a pre-trained VGG-16. As we can observe from the first two plots, the Euclidean distance does not reflect the class each training point is in -- even if the training point is far away from the test point it may still be a similar image in the same class. On the other hand, representer and influence function values tell us about which class each training point is in. From the third plot, we observe that influence function and representer values agree on selecting images of horses as harmful and dogs as helpful. 

\begin{figure}[t!]
    \centering
    \includegraphics[width=0.7\textwidth]{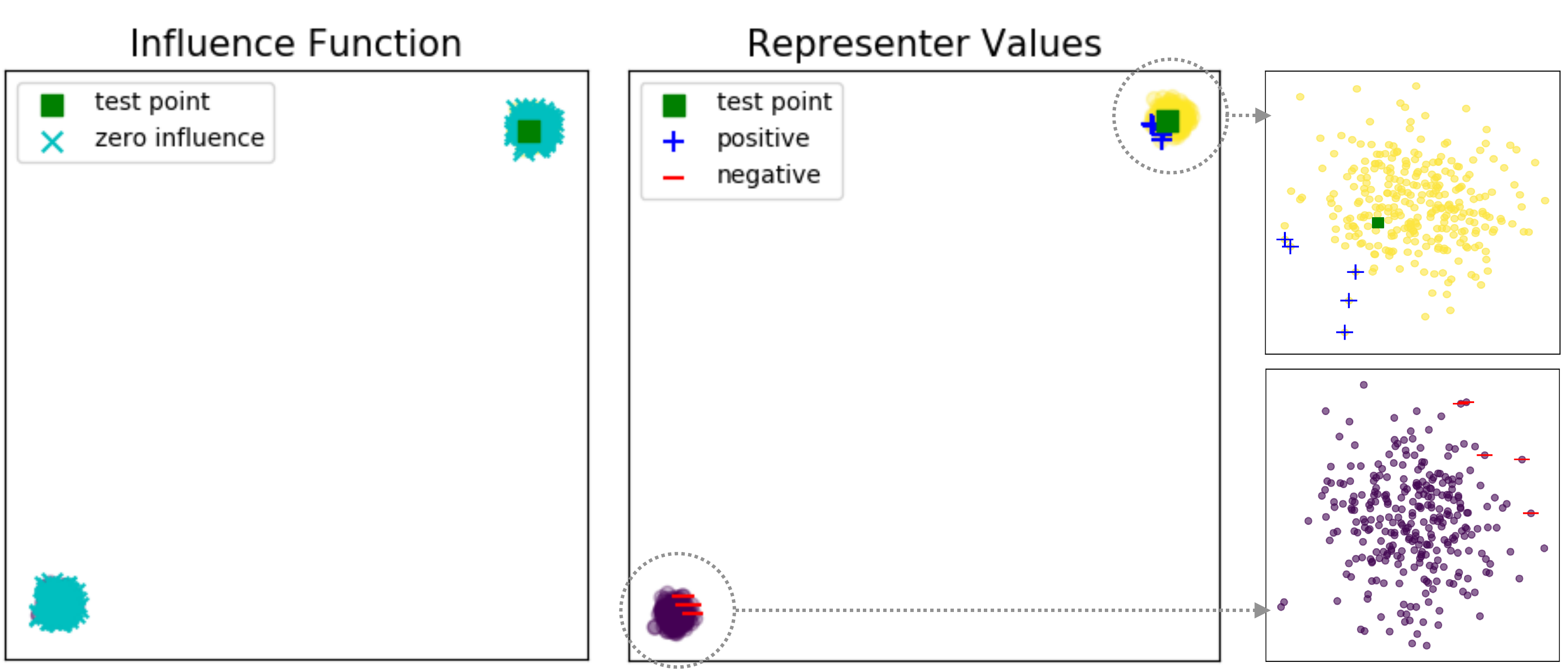}
     \caption{Demonstration of numerical stability on a 2-D toy data. Even with a big margin, our values faithfully provide positive examples from the same class and negative examples from the different class near the decision boundary. Influence functions return zeros for all training points, thus is not able to provide influential points.}
    \label{fig:toy}
\end{figure}

\begin{figure}[t!]
\begin{center}
\includegraphics[width=\textwidth]{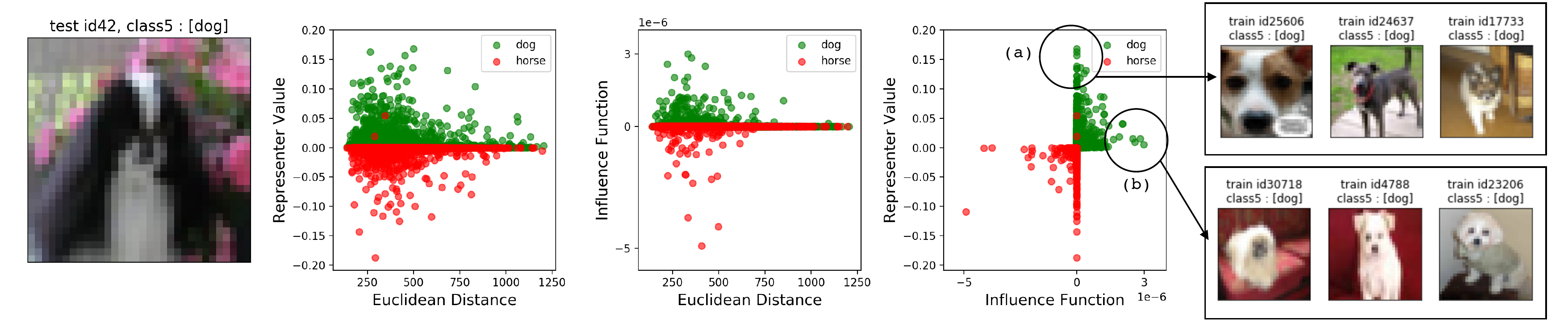}
\end{center}
\caption{Euclidean vs Influence Function vs Representer Value (ours). Representer values have bigger scale in general. Some examples from regions (a) where the influence function value is zero while our representer value is big and (b) where our representer value is small while the influence function value is big are shown. }
\label{fig:eucvsours}
\end{figure}

\section{More Examples of Positive/Negative Reperesenter Points}
\label{appdx:reppointsexamples} 

More examples of positive and negative representer points are shown in Figure~\ref{fig:representer_full}. Observe that the positive points all have the same class label as the test image with high resemblance, while the negative points, despite their similarity to the test image, have different classes. 

\begin{figure}[ht!]
    \centering
    \includegraphics[width=\textwidth]{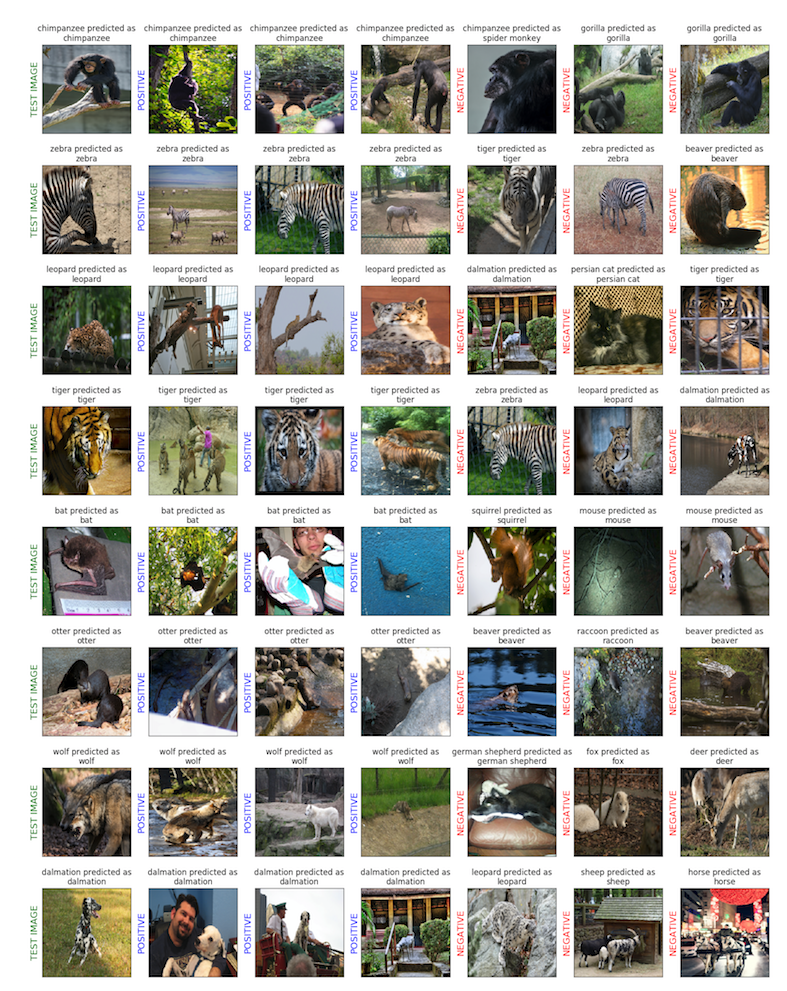}
    \vspace{-5mm}
    \caption{More examples of positive and negative representer points. The first column is composed of test images from different classes from AwA dataset; the next three images are the positive representer points; the next three are negative representer points for each test image.}
    \label{fig:representer_full}
\end{figure}

\section{Representer Points of LSTM on NLP Data}

We perform a preliminary experiment on an LSTM network trained on a IMDB movie review dataset  \cite{maas2011learning}. Each data point is a review about a movie, and the task is to identify whether the review has a positive or negative sentiment. The pretrain model achieves $87.5\%$ accuracy. We obtain the positive and negative repesenter points with methods described in section \ref{sec:pre}. In Table \ref{tab:tab_nlp}, we show the test review which is predicted as a negative review by the LSTM network. We observe that both the top-1 positive and negative representer contain negative connotations just like the test review, but in the negative representer (which is a positive review about the movie) the negative connotation comes from a character in the movie rather than from a reviewer.

\begin{table}[H]
\small
\begin{tabular}{SS} 
    \toprule
    {Test Review}&{<START> when i first saw this movie in the theater i was so angry it completely }\\
    {}&{ blew in my opinion i didn't see it for a decade then decided what the hell let's see   }\\
    {}&{ i'm watching all <> movies now to see where it went wrong my guess is it was with  }\\
    {}&{sequel 5 that was the first to <> the whole i am in a dream <> i see weird stuff oh <> }\\
    {}&{it’s not a dream oh wait i see something spooky oh never mind <> storyline those }\\
    {}&{which made it so scary in the first place nothing fantasy nothing weird the box got}\\
    {}&{opened boom they came was the only one that could bargain her way out of it first}
    \\ 
    \midrule
    {Positive Representer}&{<START> no not the <> of <> the <> <> <> but the mini series <> <> lifetime must}\\
    {}&{  have realized what a dog this was because the series was burned off two episodes at a }\\
    {}&{ time most of them broadcast between 11 p m friday nights and 1 a m saturday <> as to  } \\
    {}&{why i watched the whole thing i can only <> to <> sudden <> attacks of <> br br most } \\
    {}&{of the cast are <> who are likely to remain unknown the only two <> names are shirley }\\
    {}&{ jones and rachel ward who turn in the only decent performances jones doesn't make it   } \\
    {}&{through the entire series lucky woman ward by the way is aging quite well since her   }
    \\ \midrule
    {Negative Representer}&{<START> this time around <> is no longer royal or even particularly close to being any  }\\
    {}&{ such thing instead rather a butler to the prince <> portrayed by hugh <> who <> tim <>  }\\
    {}&{ who presence is <> missed and that hole is never filled his character had an innocent charm}\\
    {}&{ was a bumbling and complete <> we can't help but care for him which isn't at all true of his}\\
    {}&{ <> as being <> which he apparently was according to the <> page not to mention loud}\\
    {}&{<> and utterly non threatening <> can now do just about what he <> and does so why is }\\
    {}&{he so frustrated and angry honestly it gets depressing at times yes his master is a <> they}
    \\ \midrule
     \bottomrule
\end{tabular}
\vspace{2mm}
\caption{An example of top-1 positive and negative representer points for IMDB dataset. <> stands for unknown words since only top 5000 vocabularies are used.}
\label{tab:tab_nlp}
\end{table}
\end{document}